\def\eqref#1{equation~\ref{#1}}
\def\1{\bm{1}}
\DeclareMathAlphabet{\mathsfit}{\encodingdefault}{\sfdefault}{m}{sl}
\SetMathAlphabet{\mathsfit}{bold}{\encodingdefault}{\sfdefault}{bx}{n}
\newtheorem{theorem}{Theorem}
\title{Narrowing Information Bottleneck Theory for Multimodal Image-Text Representations Interpretability}
\author{
Zhiyu Zhu\textsuperscript{\rm 1}, 
Zhibo Jin\textsuperscript{\rm 1}, 
Jiayu Zhang\textsuperscript{\rm 2}, 
Nan Yang\textsuperscript{\rm 3}, 
Jiahao Huang\textsuperscript{\rm 3 }, 
\\ \textbf{Jianlong Zhou\textsuperscript{\rm 1} 
\& Fang Chen\textsuperscript{\rm 1 \thanks{Corresponding author: fang.chen@uts.edu.au}}}  \\
University of Technology Sydney\textsuperscript{\rm 1}, SuZhouYierqi\textsuperscript{\rm 2}, University of Sydney\textsuperscript{\rm 3}
}
\begin{document}

\maketitle

\begin{abstract}
The task of identifying multimodal image-text representations has garnered increasing attention, particularly with models such as CLIP (Contrastive Language-Image Pretraining), which demonstrate exceptional performance in learning complex associations between images and text. Despite these advancements, ensuring the interpretability of such models is paramount for their safe deployment in real-world applications, such as healthcare. While numerous interpretability methods have been developed for unimodal tasks, these approaches often fail to transfer effectively to multimodal contexts due to inherent differences in the representation structures. Bottleneck methods, well-established in information theory, have been applied to enhance CLIP's interpretability. However, they are often hindered by strong assumptions or intrinsic randomness. To overcome these challenges, we propose the Narrowing Information Bottleneck Theory, a novel framework that fundamentally redefines the traditional bottleneck approach. This theory is specifically designed to satisfy contemporary attribution axioms, providing a more robust and reliable solution for improving the interpretability of multimodal models. In our experiments, compared to state-of-the-art methods, our approach enhances image interpretability by an average of 9\%, text interpretability by an average of 58.83\%, and accelerates processing speed by 63.95\%. Our code is publicly accessible at \url{https://github.com/LMBTough/NIB}.
\end{abstract}

\section{Introduction}
CLIP (Contrastive Language-Image Pretraining) has rapidly become a pivotal model in the field of multimodal learning, especially excelling in its ability to connect the visual and textual modalities~\citep{lin2023multimodality}. By training on large-scale image-text pairs collected from the internet, CLIP is capable of performing zero-shot classification and image-text retrieval tasks, making it an indispensable component of modern generative artificial intelligence~\citep{novack2023chils,jiang2023cross}. With its strong visual-textual understanding capabilities, CLIP can generate, classify, and explain content without the need for fine-tuning, providing robust support for various generative AI applications. As one of the most representative Multimodal Image-Text Representation (MITR) methods, CLIP's core strength lies in mapping images and texts into a shared embedding space, significantly enhancing the performance of multimodal tasks.

Despite its outstanding performance in MITR tasks, developing effective interpretability methods to reveal CLIP's decision-making mechanisms has become increasingly important. The black-box nature of CLIP's multimodal embeddings presents significant challenges in high-risk applications such as medical diagnosis and content moderation, where transparency and reliability are crucial~\citep{eslami2021does,tong2024eyes,yuan2024rethinking,zhu2024attexplore}. A deeper understanding of how CLIP establishes associations between visual and textual representations is essential to ensure the transparency and trustworthiness of its outputs.

There have been numerous interpretability methods focused on unimodal tasks~\cite{ribeiro2016should,sundararajan2017axiomatic,zhu2024iterative}, but these methods are not designed for the unique characteristics of MITR tasks, resulting in suboptimal performance when directly applied to such tasks. However, there are existing interpretability methods specifically developed for MITR tasks. Despite their development, these methods often suffer from randomness issues, requiring additional sampling or loss information~\citep{wang2023visual}, which leads to a crisis of trust in the interpretability method itself. These issues will be further analyzed in the related work section.

Given that CLIP can generate unique image and text representations without the need for additional samples and can directly establish their correlations, it is possible to design an interpretability algorithm that unveils the mechanisms behind these correlations without requiring extra sampling. M2IB~\citep{wang2023visual}, based on the Information Bottleneck Principle (IBP), proposes an interpretability method that does not require additional samples. This method controls the amount of feature information through a Bottleneck layer and optimizes the parameters of this layer to maximize the mutual information between the representations and the task target while minimizing the correlation between the representations and the original sample. Although IBP has a solid theoretical foundation in information theory, in practice, its reliance on hyperparameters and random sampling often introduces bias into the interpretation results. We will discuss this issue in detail in Section~\ref{sec.ibp}.

To address the aforementioned challenges, we propose a novel Narrowing Information Bottleneck Theory (NIBT). Through rigorous theoretical derivation, NIBT effectively eliminates the randomness and hyperparameter dependency in IBP, resulting in more deterministic interpretability outcomes. Additionally, we introduce a new concept of negative property, which identifies feature dimensions that negatively impact the model's predictions, further enhancing the model's interpretability. Our Contributions as follows:

\begin{itemize}
    \item We systematically summarize existing MITR interpretability methods and highlight the limitations.
    \item We propose and derive the novel Narrowing Information Bottleneck Theory, which enables interpretation of MITR tasks without randomness, while preserving the advantages of the IBP.
    \item Our research significantly improves the interpretability of the CLIP model, and we release our method as open-source for further research and application.
\end{itemize}

\section{Related Work}
\subsection{Contrastive Language-Image Pretraining (CLIP)}
\cite{radford2021learning} introduced CLIP, which learns multimodal embeddings of images and text by training image and text encoders on large-scale image-text paired data. This enables CLIP to establish connections between the two modalities within a unified embedding space, facilitating zero-shot transfer, where the model can make predictions based on natural language descriptions without relying on task-specific labeled data. However, the complexity of these multimodal tasks necessitates a focus on interpretability to ensure that the model’s decisions are grounded in meaningful features. Studying the interpretability of CLIP helps verify whether the model genuinely understands the relationship between vision and language, as opposed to relying on spurious correlations in the data.

\subsection{Traditional Interpretability Methods}
Traditional interpretability methods for deep learning models were initially designed for unimodal tasks. Early methods, such as Saliency Maps, generate fine-grained heatmaps by computing the gradient of the model's output with respect to input pixels. However, these methods are sensitive to noise and often yield coarse explanations. Grad-CAM~\citep{selvaraju2017grad}, by computing the gradient of activation maps in convolutional layers, produces class-specific heatmaps, making the explanations more intuitive, particularly for convolutional neural networks (CNNs). RISE~\citep{Petsiuk2018rise} further advances the field by introducing a black-box method that applies random masks to different regions of the input image, observes changes in the model's output, and generates heatmaps that account for both global and local interpretability. RISE’s advantage lies in its model-agnostic nature, making it applicable to any architecture. However, due to its reliance on random sampling, it is computationally expensive and may introduce some noise. LIME~\cite{ribeiro2016should}, another black-box method, perturbs the input locally and trains a surrogate model to provide locally linear explanations, making it applicable to any model, though it can sometimes produce inaccurate explanations in complex tasks. Overall, these interpretability methods designed for unimodal tasks do not perform well in multimodal tasks, as we demonstrate in our experiments.

With the introduction of the Sensitivity Axiom and Implementation Invariance Axiom by \cite{sundararajan2017axiomatic}, point-wise interpretable methods have rapidly evolved. The Sensitivity Axiom requires the sensitivity of a model's output to align with its attribution values, while the Implementation Invariance Axiom demands that functionally equivalent models yield the same attribution results, regardless of implementation. Currently, the most advanced attribution methods based on adversarial attacks~\cite{jin2024benchmarking}, such as AGI~\citep{pan2021explaining} and MFABA~\citep{zhu2024mfaba}, satisfy both axioms and have shown strong interpretability for traditional CNN models. However, these methods have not been optimized for multimodal tasks, and directly modifying their loss functions for multimodal tasks is infeasible. Moreover, they are primarily designed for unimodal tasks, rely on downstream tasks for explanations, and lack adaptation to multimodal contexts. As a result, while traditional interpretability methods have made progress in unimodal tasks, there remains a significant gap in addressing multimodal tasks and novel models like CLIP, requiring further optimization and extension.

\subsection{Interpretability Methods for multimodal tasks}
Currently, existing interpretability methods for multimodal tasks still exhibit several limitations that require improvement, as shown in Table~\ref{tab.comp}. In the following sections, we will provide a detailed explanation of the causes and effects of these limitations.

\textbf{No Extra Example} indicates that no additional samples are required during the interpretation process, which is crucial because in real-world scenarios, we do not know what samples to select, nor can we explain why a particular pair of samples are correlated. For instance, if we aim to explain which parts of an image depict a cat, the image in the CLIP model already exhibits high activation with respect to the text \textit{cat}. Therefore, we should not need to reference 100 additional images of cats and 100 images without cats. A well-trained model that already understands the semantics should not require such sampling. \textbf{No Randomness} means that the calculation process involves no randomness, as randomness reduces trust in the interpretability method. \textbf{No Specific Structure} means that the method does not depend on a particular model structure. \textbf{No Info Loss} ensures that no information is lost during the interpretation process, such as interpreting only a subset of the model's output. \textbf{Current Model} indicates that the method explains the model as it currently exists, without constructing a new model for interpretation. \textbf{No Downstream Task} means that no downstream tasks are required for the explanation process.
\begin{table}[htpb]
\centering
\caption{Comparison of interpretability methods based on several criteria: whether they require no extra examples, no randomness, need no specific structure, avoid information loss, explain the current model, and don't rely on downstream tasks.}
\label{tab.comp}
\resizebox{\textwidth}{!}{%
\begin{tabular}{@{}lcccccc@{}}
\toprule
\textbf{Method}                       & \textbf{No Extra Example} &  \textbf{No Randomness}&\textbf{No Specific Structure} & \textbf{No Info Loss} & \textbf{Current Model} & \textbf{No Downstream Task} \\ \midrule
COCOA                                 & {\LARGE $\circ$}          &  {\LARGE $\circ$}&{\LARGE $\bullet$}             & {\LARGE $\circ$}      & {\LARGE $\bullet$}     & {\LARGE $\bullet$}          \\
TEXTSPAN                              & {\LARGE $\circ$}          &  {\LARGE $\circ$}&{\LARGE $\circ$}               & {\LARGE $\circ$}      & {\LARGE $\bullet$}     & {\LARGE $\bullet$}          \\
\cite{hossainexplaining} & {\LARGE $\circ$}          &  {\LARGE $\circ$}&{\LARGE $\bullet$}             & {\LARGE $\circ$}      & {\LARGE $\bullet$}     & {\LARGE $\bullet$}          \\
LICO                                  & {\LARGE $\circ$}          &  {\LARGE $\circ$}&{\LARGE $\bullet$}             & {\LARGE $\circ$}      & {\LARGE $\circ$}       & {\LARGE $\circ$}            \\
FALCON                                & {\LARGE $\circ$}          &  {\LARGE $\circ$}&{\LARGE $\bullet$}             & {\LARGE $\bullet$}    & {\LARGE $\bullet$}     & {\LARGE $\bullet$}          \\
 M2IB                                  & {\LARGE $\bullet$}          & {\LARGE $\circ$}& {\LARGE $\bullet$}             & {\LARGE $\circ$}      & {\LARGE $\bullet$}     &{\LARGE $\bullet$}          \\
NIB (Ours)                            & {\LARGE $\bullet$}        &  {\LARGE $\bullet$}        &{\LARGE $\bullet$}             & {\LARGE $\bullet$}    & {\LARGE $\bullet$}     & {\LARGE $\bullet$}          \\ \bottomrule
\end{tabular}%
}
\end{table}

M2IB~\citep{wang2023visual} introduced a multimodal information bottleneck method aimed at explaining the decision-making process of vision-language pre-trained models by compressing task-irrelevant information to highlight key predictive features. However, this approach introduces additional complexity, which will be analyzed further when discussing the IBP theory. Similarly, COCOA~\citep{lin2022contrastive} extended Integrated Gradients (IG) to multimodal tasks by incorporating positive and negative sample pairs in its loss function, but this requires sampling additional relevant examples, introducing extraneous information that may not be directly relevant to explaining the current sample.

Other methods like TEXTSPAN~\citep{gandelsman2023interpreting} and \cite{hossainexplaining} also suffer from sample dependency. TEXTSPAN requires constructing a specific text set to calculate similarity with the image, limiting its scope to predefined sets, while \cite{hossainexplaining} relies on selecting training data samples based on L2 distance in the embedding space, which is not always feasible in practical settings. LICO~\citep{lei2024lico} attempts to create an interpretable model by retraining it to maintain feature relationships between text and image, but this results in explaining the newly trained model rather than the original one, and randomness is introduced through batch sampling. FALCON~\citep{kalibhat2023identifying} explains each dimension in the feature space by finding images that highly activate a specific feature, but this approach does not provide explanations for individual samples, limiting its applicability. Overall, many of these methods face challenges such as reliance on additional samples, randomness, or structural dependencies, making them less suitable for clear, direct explanations of pre-trained models.

\section{Preliminary}
\subsection{Problem Definition}

Following the setup of CLIP~\citep{radford2021learning}, a trained MITR model can be defined as follows: let $f_I : \mathbb{R}^n \rightarrow \mathbb{R}^d$ denote the image encoder, which transforms an input image $x_I \in \mathbb{R}^n$ into a $d$-dimensional image representation; $f_T : \mathbb{R}^m \rightarrow \mathbb{R}^d$ denote the text encoder, which transforms an input text $x_T \in \mathbb{R}^m$ into a $d$-dimensional text representation. We can use $\cos \left < f_I(x_I),f_T(x_T) \right >$ to evaluate the matching performance between the visual and textual modalities. Additionally, the representations can be directly applied to downstream tasks~\citep{sanghi2022clip,zhou2023zegclip}. In the following, we use $f$ to represent either $f_I$ or $f_T$. By substituting $f$ with $f_I$, we obtain the results associated with the image modality, and similarly, we can derive the results for the text modality.

For an $L$-layer neural network, we can decompose it into the concatenation of two neural networks $f^{1-l} \circ f^{l-L}(x)$ at the $l$-th layer. For ease of expression, we use $z= f^{1-l}(x)$ to represent the latent feature of the intermediate layer.

Our goal is to construct an interpretability method $A$ that yields $A(x) \in \mathbb{R}^{|x|}$. The larger the value of $A$, the more important that dimension is for the representation.

\subsection{The Information Bottleneck Principle} \label{sec.ibp}

The information bottleneck principle~\citep{tishby2000information}, based on information theory, introduces the bottleneck to control the amount of information passing through it, aiming to find the minimal feature encoding that retains the least amount of information from the original sample while preserving the necessary information for a given task. For ease of reading and understanding, we provide a brief explanation and simplify the notation, with more detailed analysis provided in \textbf{Appendix~\ref{apx.prin}}. The goal of the information bottleneck principle is to construct an optimization function and find the optimal parameter $\lambda$:
\begin{equation}
\label{eq.1}
    \lambda ^{*} = \max_{\lambda } I(\tilde{z},Y) - \beta I(\tilde{z},x;\lambda)
\end{equation}
where $I(x, Y) = H(x) - H(x|Y) = H(Y) - H(Y|x)$ represents the mutual information between events $x$ and $Y$, which can be interpreted as the reduction in uncertainty about event $x$ after observing event $Y$. Intuitively, the stronger the correlation between the two, the greater the reduction in uncertainty, and thus the larger the mutual information. Here, $x$ represents the input sample, $\tilde{z}$ represents the encoding of $x$, which can be understood as the extracted features, and $Y$ represents the given task. $\lambda$ controls the size of the bottleneck. We emphasize the \textbf{Key Point 1}: $\lambda^*$ represents the value of $\lambda$ when the mutual information between the encoding and the task is maximized while minimizing the correlation with the original sample (i.e., extracting as few features as possible). The optimization process follows~\citep{schulz2020restricting}.

\section{Method}
In this section, we first deconstruct how the Information Bottleneck Principle extracts the importance distribution of sample features and analyze the shortcomings of applying this theory to the interpretability of deep learning. We then introduce our Narrowing Bottleneck Theory, which is rigorously derived and applied to the interpretability of Multimodal Image-Text Representations.

\subsection{Analysis of the Information Bottleneck Principle (IBP)}
Several works~\citep{wang2023visual, schulz2020restricting} have applied the IBP to the interpretability of neural networks. Their approach typically involves inserting a Bottleneck layer at the $l$-th layer of the neural network, with $\lambda \in \mathbb{R}^{|z|}$ controlling the amount of information in the $l$-th layer feature $z = f^{1-l}(x)$. The optimal solution for $\lambda$, based on Equation~\ref{eq.1}, is found through gradient descent iterations, and the control is achieved as follows:
\begin{equation}
\label{eq.2}
\tilde{z}_{ic}(\lambda) = \lambda_{ic} \cdot z_{ic} + (1-\lambda)\varepsilon, \quad \varepsilon \sim N(\mu,\sigma^2)
\end{equation}
Here, following the Grad-CAM approach~\citep{selvaraju2017grad}, the dimension of $z$ is split into two parts for discussion: $i$ represents the spatial encoding, and $c$ represents the channel encoding (for instance, $z \in \mathbb{R}^{w \times h \times c}$, where $w$, $h$, and $c$ represent width, height, and channels, respectively. The $w \times h$ portion is simplified as $i$). The noise distribution follows $\varepsilon \sim N(\mu, \sigma^2)$, and any distribution independent of $z$ can be used, but a normal distribution is chosen for computational simplicity. The parameters $\sigma^2$ and $\mu$ can be arbitrarily specified. When $\lambda = \lambda^*$, the importance of $z_i$ is given by $ \sum_{c}^{} D_{KL}(P(\tilde{z}_{ic}(\lambda) | x) \| N(\mu, \sigma^2))$. Intuitively, this measures the uncertainty allowed in the $i$-th feature dimension under the condition of the \textbf{Key Point 1}. If this dimension is very close to an independent noise distribution of sample $x$, a small KL divergence implies irrelevance to $x$, i.e., unimportance, and vice versa.

\begin{figure}[htbp]
    \centering
    \begin{subfigure}[b]{0.2\textwidth}
        \centering
        \includegraphics[width=\linewidth]{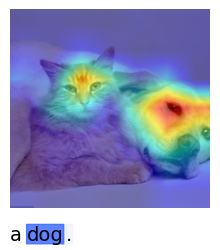} 
        \caption{M2IB}
        \label{fig:left-image}
    \end{subfigure}
    \begin{subfigure}[b]{0.2\textwidth}
        \centering
        \includegraphics[width=\linewidth]{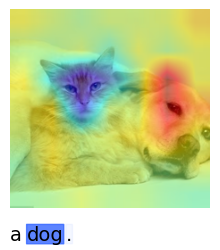} 
        \caption{NIB (Ours)}
        \label{fig:nib}
    \end{subfigure}
        \begin{subfigure}[b]{0.2\textwidth}
        \centering
        \includegraphics[width=\linewidth]{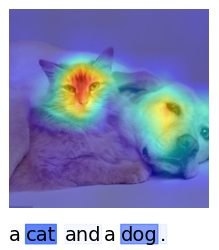} 
        \caption{M2IB}
        \label{fig:left-image}
    \end{subfigure}
        \begin{subfigure}[b]{0.2\textwidth}
        \centering
        \includegraphics[width=\linewidth]{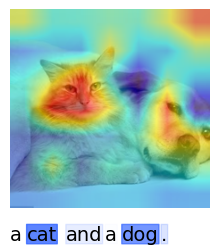} 
        \caption{NIB (Ours)}
        \label{fig:left-image}
    \end{subfigure}
    \caption{Illustrations of feature attributions produced by different methods.}
    \label{fig:combined-figure}
\end{figure}

While the IBP theory itself is solid, its application faces several limitations as following: 
\begin{itemize}
    \item [1.] the optimization process introduces randomness, as the calculation of $I(\tilde{z}, Y)$ depends on $\tilde{z}_{ic}(\lambda)$, which involves sampling noise $\lambda$. This results in numerous local optima in the optimization of $\lambda_{ic}$, leading to variations between runs.
    \item [2.] the hyperparameter $\beta$ in the optimization objective significantly influences the interpretability results~\citep{wang2023visual}. $\beta$ controls the trade-off between the two mutual information terms, allowing different task information to result in entirely different explanations.
    \item [3.] the KL divergence is always positive, preventing the explanation from reflecting negative properties (As shown in Figure~\ref{fig:combined-figure}, our proposed method successfully distinguishes and excludes negative properties from the explanation. In Figure~\ref{fig:left-image}, the M2IB method continues to highlight irrelevant negative features, such as the cat's face, even when the subject is a dog. However, in Figure~\ref{fig:nib}, our method correctly ignores these negative properties, focusing on more relevant, positive features, showcasing its improved attribution performance.) 
    \item [4.] the explanation results do not directly reflect the association between feature dimensions and $I(\tilde{z}, Y)$ (the explanation is derived by optimizing Equation~\ref{eq.1} to obtain $\lambda$, followed by computation).
\end{itemize}
To address the three above issues, we propose the Narrowing Bottleneck Theory.

\subsection{The Narrowing Information Bottleneck Theory}
In this section, we introduce three core theorems of the Narrowing Bottleneck Theory and propose our Narrowing Information Bottleneck Method (NIB) algorithm based on them.

We continue to introduce a Bottleneck layer at the $l$-th layer and use $\lambda$ to control the information flow, while the scalar $\lambda$ serves as a universal update parameter for each layer, the flow of information for each feature dimension is determined independently. More Details of $\lambda$ please refer to Appendix~\ref{apx.lambda}
\begin{equation}
\label{eq.2}
\tilde{z}_{ic}(\lambda) = \lambda \cdot z_{ic} + \varepsilon, \quad \varepsilon \sim N(0, \sigma^2)
\end{equation}
However, unlike Equation~\ref{eq.2}, we use the same scalar $\lambda \in \mathbb{R}$ to control all dimensions of $z$. Additionally, we assume that the noise follows a distribution with zero mean and variance $\sigma^2$, and we eliminate the noise weighting factor $1 - \lambda$ (as long as the noise distribution is independent of $z$, this modification does not affect the bottleneck's properties). $z_{ic}$ is deterministically obtained by the model, ensuring there is no inherent randomness in its computation.

For simplicity, we present an equivalent form, where $\mathbb{I}$ represents the identity matrix:
\begin{equation}
\tilde{z}(\lambda) = \lambda \cdot z + \varepsilon, \quad \varepsilon \sim N(0, \sigma^2\mathbb{I})
\end{equation}
\begin{theorem}[Narrowing Information Bottleneck]\label{theorem.nib}
Given $0 \leq \lambda_1 < \lambda_2 \leq 1$, we have $\sup I(\tilde{z}(\lambda_1), x) < \sup I(\tilde{z}(\lambda_2), x)$, and when $\lambda = 0$, we have $I(\tilde{z}(0), x) = 0$.
\end{theorem}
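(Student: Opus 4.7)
The plan is to split the claim into the trivial base case $\lambda = 0$ and the strict monotonicity for $0 \leq \lambda_1 < \lambda_2 \leq 1$, and to handle the latter via a data processing inequality (DPI) argument followed by an appeal to strict monotonicity of the Gaussian channel capacity. The base case is immediate: substituting into the definition of $\tilde{z}$ gives $\tilde{z}(0) = \varepsilon$, which is independent of $x$ by construction, so $I(\tilde{z}(0), x) = 0$.

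For the main inequality, the key observation is that $\tilde{z}(\lambda_1)$ can be realized as the output of an auxiliary Gaussian noise channel fed by $\tilde{z}(\lambda_2)$. Concretely, I would introduce an independent noise $\varepsilon' \sim N(0, \sigma^2(1 - (\lambda_1/\lambda_2)^2)\mathbb{I})$ and define the stochastic map $T(y) = (\lambda_1/\lambda_2)\,y + \varepsilon'$. A short computation gives $T(\tilde{z}(\lambda_2)) = \lambda_1 z + (\lambda_1/\lambda_2)\varepsilon + \varepsilon'$, and the residual $(\lambda_1/\lambda_2)\varepsilon + \varepsilon'$ is a sum of independent Gaussians with total variance $\sigma^2$, hence equal in distribution to $\varepsilon$. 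Therefore the joint law of $(x, T(\tilde{z}(\lambda_2)))$ coincides with that of $(x, \tilde{z}(\lambda_1))$, producing the Markov chain $x \to \tilde{z}(\lambda_2) \to \tilde{z}(\lambda_1)$, and the DPI yields $I(\tilde{z}(\lambda_1), x) \leq I(\tilde{z}(\lambda_2), x)$ for every input distribution; this weak inequality then passes to the suprema. The boundary $\lambda_1 = 0$ is subsumed by the same formula, with $T(y) = \varepsilon'$ and $\varepsilon' \stackrel{d}{=} \varepsilon$.

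To upgrade $\leq$ to $<$ at the supremum level, I would appeal to the Gaussian channel capacity formula. Under a fixed second-moment bound on $z$ (implicit in the supremum), the channel $\tilde{z}(\lambda) = \lambda z + \varepsilon$ depends on $\lambda$ only through the SNR $\lambda^{2}/\sigma^{2}$, and the Shannon capacity $\frac{1}{2}\log(1+\mathrm{SNR})$ is strictly increasing in SNR. Equivalently, the auxiliary noise $\varepsilon'$ in the DPI construction is non-degenerate as soon as $\lambda_1 < \lambda_2$, and strict concavity of differential entropy along the Gaussian convolution semigroup (a consequence of de Bruijn's identity, or of the entropy power inequality) forces the DPI step to be strict at capacity.

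The main obstacle I anticipate is making precise what the supremum ranges over: without an input constraint one has $\sup I(\tilde{z}(\lambda), x) = +\infty$ for every $\lambda > 0$, which trivializes the strict inequality. I would therefore first pin down the natural normalization (a second-moment bound on $z$ inherited from the layer feature $z = f^{1-l}(x)$) and argue strictness only after this constraint is in place; the remaining steps, the DPI construction and the capacity comparison, are then routine.
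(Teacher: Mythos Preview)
Your argument is correct under your reading of the statement, but it takes a genuinely different route from the paper and, more importantly, rests on a different interpretation of the symbol $\sup$. In the paper, $\sup I(\tilde z(\lambda),x)$ does \emph{not} denote a supremum over input distributions (channel capacity); it denotes the variational (Barber--Agakov) upper bound $E_x\bigl[D_{\mathrm{KL}}\bigl(P(\tilde z\mid x)\,\|\,Q(\tilde z)\bigr)\bigr]$ with the fixed reference $Q=N(0,\sigma^2\mathbb I)$. Under that reading the proof is a one-line closed-form computation: since $P(\tilde z(\lambda)\mid x)=N(\lambda z,\sigma^2\mathbb I)$, the KL equals $\tfrac{\lambda^2}{2\sigma^2}\|z\|^2$, so the bound is $E_x\bigl[\tfrac{\lambda^2}{2\sigma^2}\|z\|^2\bigr]$, manifestly strictly increasing in $\lambda$ and vanishing at $\lambda=0$. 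No DPI, no capacity formula, no input constraint is needed, because nothing is being optimized over the law of $x$.

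Your degraded-channel construction $T(y)=(\lambda_1/\lambda_2)y+\varepsilon'$ and the resulting Markov chain $x\to\tilde z(\lambda_2)\to\tilde z(\lambda_1)$ are correct and in fact yield the stronger pointwise inequality $I(\tilde z(\lambda_1),x)\le I(\tilde z(\lambda_2),x)$ for \emph{every} law of $x$, not just at any supremum; this is arguably more informative than what the paper proves. The cost is exactly the issue you identified: to get strictness at the supremum level you must impose a second-moment constraint and invoke Gaussian capacity, whereas the paper sidesteps this entirely by comparing explicit upper bounds rather than true mutual informations. So your anticipated obstacle (the supremum being $+\infty$ without a constraint) is real under your reading, but it simply does not arise under the paper's intended meaning of $\sup$.
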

\begin{proof}
We start by expressing the mutual information $I(\tilde{z}, x)$ as:
\begin{equation}
\begin{aligned}
I(\tilde{z}, x) &= E_x\left[D_{\text{KL}}\left[P(\tilde{z} \mid x) \| Q(\tilde{z})\right] - D_{\text{KL}}\left[P(\tilde{z}) \| Q(\tilde{z})\right]\right] \\
&\leq E_x\left[D_{\text{KL}}\left[P(\tilde{z} \mid x) \| \tilde{Q}(\tilde{z})\right]\right]
\end{aligned}
\end{equation}
Given that $P(\tilde{z}(\lambda) \mid x) = N(\lambda_z, \sigma^2 \mathbb{I})$, we can compute the difference between the mutual information at two values of $\lambda$ as follows:
\begin{equation}
\sup I(\tilde{z}(\lambda_1), x) - \sup I(\tilde{z}(\lambda_2), x) = E_x\left[\frac{1}{2} \cdot \frac{1}{\sigma^2}\left(\lambda_1^2 - \lambda_2^2\right) \| \mu \|^2\right]
\end{equation}
Since $(\lambda_1^2 - \lambda_2^2) < 0$, it follows that:
\begin{equation}
\sup I(\tilde{z}(\lambda_1), x) < \sup I(\tilde{z}(\lambda_2), x)
\end{equation}
This completes the key proof. Further details can be found in \textbf{Appendix~\ref{apx.proof1}}.
\end{proof}

\textbf{Theorem~\ref{theorem.nib} shows that by decreasing the value of $\lambda$, we can reduce the mutual information between $x$ and its encoding, and when $\lambda = 1$, $I(\tilde{z}(\lambda), x)$ reaches its maximum value.} Since the computation involves noise sampling, we aim to remove the randomness caused by Theorem~\ref{theorem.nib}, which leads us to Theorem~\ref{theorem.2}.
\begin{theorem}\label{theorem.2}
When $\sigma^2 \rightarrow 0$, given $0 \leq \lambda_1 < \lambda_2 \leq 1$, we have:
\begin{equation}
\sup_{\sigma^2 \rightarrow 0} I(\tilde{z}(\lambda_1), x) < \sup_{\sigma^2 \rightarrow 0} I(\tilde{z}(\lambda_2), x)
\end{equation}
\end{theorem}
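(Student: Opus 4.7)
The plan is to reduce Theorem~\ref{theorem.2} to a direct consequence of the difference formula already established in the proof of Theorem~\ref{theorem.nib}. That proof gives, for every $\sigma^2 > 0$, the identity
\begin{equation*}
\sup I(\tilde{z}(\lambda_1), x) - \sup I(\tilde{z}(\lambda_2), x) = E_x\!\left[\tfrac{1}{2\sigma^2}(\lambda_1^2 - \lambda_2^2)\|\mu\|^2\right].
\end{equation*}
My first step is to invoke this identity without reproving it, so that the statement about limits reduces to a sign analysis of the right-hand side.

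Next, I would observe that for $0 \leq \lambda_1 < \lambda_2 \leq 1$ the factor $(\lambda_1^2 - \lambda_2^2)$ is strictly negative and $\|\mu\|^2 \geq 0$ almost surely; provided the latent representation $z$ is not almost surely zero (the degenerate case that would trivialize the whole bottleneck construction), $E_x[\|\mu\|^2] > 0$. Hence the difference above is strictly negative for every fixed $\sigma^2 > 0$, and its magnitude is $\Theta(1/\sigma^2)$, which diverges as $\sigma^2 \to 0$. The strict inequality is therefore preserved, and in fact amplified, under the limit; passing to the limit on both sides then yields $\sup_{\sigma^2 \to 0} I(\tilde{z}(\lambda_1), x) < \sup_{\sigma^2 \to 0} I(\tilde{z}(\lambda_2), x)$.

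The main obstacle I anticipate is conceptual rather than computational: both suprema may individually diverge as $\sigma^2 \to 0$ (the encoding becomes deterministic, so differential mutual information blows up), so one cannot naively compare the two limits as finite numbers. The cleanest way around this is to refuse to compare the limits separately and instead to compare the difference, which has a well-defined signed limit of $-\infty$. This guarantees strict ordering for every sufficiently small $\sigma^2$ and justifies reading the notation $\sup_{\sigma^2 \to 0}$ as the monotone limit of the suprema. A brief remark addressing the degenerate case $\mu \equiv 0$ (where the difference is zero for all $\sigma^2$, consistent with $\tilde z$ being pure noise) would close the argument cleanly, with the full technical details deferred to an appendix analogous to \textbf{Appendix~\ref{apx.proof1}}.
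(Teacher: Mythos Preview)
Your proposal is correct and follows essentially the same route as the paper: invoke the difference formula from the proof of Theorem~\ref{theorem.nib} and examine its sign as $\sigma^2 \to 0$. If anything, your treatment is more careful than the paper's one-line appendix proof---the paper asserts the difference tends to $0$ (whereas, as you correctly observe, it actually diverges to $-\infty$ for nondegenerate $\mu$), and you additionally flag the $\infty - \infty$ indeterminacy that the paper does not address.
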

In Theorem~\ref{theorem.2}, we demonstrate that the conclusion of Theorem~\ref{theorem.nib} holds as $\sigma^2$ tends to zero. Specifically, as $\sigma^2 \rightarrow 0$, $z(\lambda)$ converges to $\lambda z$. In practical scenarios, due to precision limitations, the two expressions will become indistinguishable, effectively eliminating any inherent randomness.
\begin{theorem}\label{theorem.3}
Given the function $I(\tilde{z}, Y)$, the following holds:
\begin{equation} \label{eq.sens}
\sum_{i} \sum_{c} \int_{0}^{1} \frac{\partial I(\tilde{z}(\lambda), Y)}{\partial \tilde{z}_{ic}(\lambda)} \frac{\partial \tilde{z}_{ic}(\lambda)}{\partial \lambda} d\lambda = I(\tilde{z}(1), Y) - I(\tilde{z}(0), Y)
\end{equation}
\end{theorem}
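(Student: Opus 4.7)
The plan is to recognise the left-hand side as the line integral along the straight-line path $\lambda \mapsto \tilde{z}(\lambda) = \lambda z + \varepsilon$ of the gradient of the map $\tilde{z} \mapsto I(\tilde{z}, Y)$, and to close it by the fundamental theorem of calculus. This is the same ``path method'' used to establish completeness for Integrated Gradients, which is natural here since the bottleneck already parametrises a straight path between the noise baseline $\tilde{z}(0) = \varepsilon$ and the full-signal endpoint $\tilde{z}(1) = z + \varepsilon$.

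First I would define the scalar function $g(\lambda) := I(\tilde{z}(\lambda), Y)$ for $\lambda \in [0,1]$, where $\tilde{z}(\lambda)$ depends on $\lambda$ only through the affine combination $\lambda z_{ic} + \varepsilon_{ic}$ in each coordinate. Second, I would apply the multivariate chain rule to obtain
\begin{equation}
\frac{d g(\lambda)}{d\lambda} = \sum_{i}\sum_{c} \frac{\partial I(\tilde{z}(\lambda), Y)}{\partial \tilde{z}_{ic}(\lambda)} \cdot \frac{\partial \tilde{z}_{ic}(\lambda)}{\partial \lambda},
\end{equation}
noting that $\partial \tilde{z}_{ic}(\lambda)/\partial \lambda = z_{ic}$ is constant in $\lambda$, so $g$ inherits its regularity entirely from the smoothness of $I(\cdot, Y)$ along the path. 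Third, I would integrate both sides over $\lambda \in [0,1]$, swap the (finite) sums with the integral by linearity, and invoke the fundamental theorem of calculus to identify $\int_0^1 g'(\lambda)\, d\lambda = g(1) - g(0) = I(\tilde{z}(1),Y) - I(\tilde{z}(0),Y)$, which is the claimed identity.

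The main obstacle will not be the calculus step itself but justifying that $g$ is absolutely continuous on $[0,1]$, so that the chain rule and the fundamental theorem both apply. Concretely, I would need $\tilde{z} \mapsto I(\tilde{z}, Y)$ to be continuously differentiable almost everywhere along the path, which in turn requires enough regularity on the conditional density $P(Y \mid \tilde{z})$ to permit differentiation under the expectation defining the mutual information. Under the Gaussian noise model $\varepsilon \sim N(0, \sigma^2 \mathbb{I})$ assumed in Equation~\ref{eq.2}, the density of $\tilde{z}(\lambda)$ is smooth in its argument for any $\sigma^2 > 0$, so this reduces to a standard dominated-convergence argument to interchange derivative and expectation; I would state this as a regularity assumption and sketch the interchange rather than carry it out in full. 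Once that justification is in place, the identity follows immediately from the two displayed steps above.
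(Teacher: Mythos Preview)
Your proposal is correct and follows essentially the same approach as the paper, which simply appeals to ``the standard method of change of variables in integral calculus'' and refers the reader to textbooks. In fact, your treatment is more careful than the paper's: you explicitly define $g(\lambda)$, invoke the multivariate chain rule, and flag the regularity needed for the fundamental theorem of calculus, whereas the paper omits all of these details.
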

Building on Theorem~\ref{theorem.2}, we can adjust the value of $\lambda$ to control the size of $I(\tilde{z}(\lambda), x)$. When $\lambda = 0$, $I(\tilde{z}(\lambda), x)$ is minimized, and when $\lambda = 1$, $I(\tilde{z}(\lambda), x)$ is maximized. Therefore, $I(\tilde{z}(\lambda), x)$ can be viewed as a function of $\lambda$, where the process from 1 to 0 corresponds to the bottleneck transitioning from fully open to completely closed. The importance $A(z_i)$ of $z_i$ can be expressed as:
\begin{equation}
A(z_i) = \sum_{c}^{} \int_{0}^{1} \frac{\partial I(\tilde{z}(\lambda), Y)}{\partial \tilde{z}_{ic}(\lambda)} \frac{\partial \tilde{z}_{ic}(\lambda)}{\partial \lambda} d\lambda
\end{equation}
\textbf{The total importance across all dimensions of $z$ equals the loss in $I(\tilde{z}(\lambda), x)$ caused by narrowing the bottleneck from fully open to closed.} Negative values are also allowed, as some features may reduce $I(\tilde{z}(\lambda), x)$. This process eliminates the need for balancing two mutual information terms, thus avoiding the introduction of the $\beta$ hyperparameter and preventing instability in the interpretability results.

For the design of $I(\tilde{z}(\lambda), x)$, we follow the work of~\cite{wang2023visual}, using $ \cos \left < f_I(x_I), f_T(x_T)\right >$ as an equivalent replacement. It is worth noting that if we aim to obtain the attribution result for the image modality, $I(\tilde{z}(\lambda), x)$ becomes $\cos \left < f^{l-L}(\lambda f^{1-l}(x_I)), f_T(x_T)\right >$. Additionally, since $i$ in $z_i$ represents the spatial encoding corresponding to the original encoding, the importance distribution of the original sample features $A(x)$ can be obtained by performing linear interpolation on $\lambda$ from 0 to 1, as described in~\citep{wang2023visual, schulz2020restricting}. Theorem~\ref{theorem.2}, Theorem~\ref{theorem.3}, and the proofs of the Sensitivity and Implementation Invariance axioms are provided in the \textbf{Appendix}.

\section{Experiments}
\subsection{Models and Datasets}
In this study, we follow the experimental setup of M2IB~\citep{wang2023visual}, utilizing the pre-trained CLIP model with a Vision Transformer (ViT-B/32)~\citep{dosovitskiy2020image} as the visual encoder. CLIP's joint optimization of image and text alignment has demonstrated outstanding performance in multimodal tasks. We conduct experiments on three different datasets: Conceptual Captions~\citep{sharma2018conceptual}, ImageNet~\citep{deng2009imagenet}, and Flickr8k~\citep{hodosh2013framing}. Each of these datasets has unique characteristics, providing diverse visual and textual inputs for the model. Conceptual Captions is a large-scale image-text alignment dataset containing automatically generated image-text pairs, helping the model learn a shared feature space between vision and language. ImageNet, a classic image classification dataset, contains a large number of annotated images and a wide range of class labels, making it a standard dataset for training and evaluating visual models. Flickr8k is a relatively small image-text alignment dataset consisting of 8,000 images and their corresponding natural language descriptions, commonly used to assess multimodal alignment in image captioning and text generation tasks.

\subsection{Parameter Settings}
We reduced the number of parameters required by the IBP-based method while retaining the core hyperparameters used during the generation of saliency maps, including the number of iterations (\textit{num\_steps}) and the layer number.

\textbf{num\_steps: Number of Iterations}  
The \textit{num\_steps} parameter refers to the number of iterations used during gradient optimization, and it primarily affects the precision. It determines how many updates are made to the feature maps in each layer during gradient backpropagation. A larger \textit{num\_steps} generally leads to higher precision, as the model is given more iterations to accumulate gradients and refine attribution results. However, as the number of iterations increases, so does the computational cost, necessitating a balance between accuracy and efficiency in practical applications. In our experiments, \textit{num\_steps} is set to 10, which has been experimentally verified to provide a higher precision result while maintaining relatively low computational overhead.

\textbf{layer number: Layer Number}  
The \textit{layer number} refers to the identifier of the specific layer chosen from the neural network model as the bottleneck layer. In this study, we selected the 9th layer (\textit{layer number} = 9), indicating that we extract the hidden states from the 9th layer for generating saliency maps. The choice of this layer is motivated by the fact that intermediate layers typically contain rich contextual information, reflecting both low-level features and some high-level abstract representations. Specifically, using the hidden states from the 9th layer allows us to capture the model’s intermediate features, avoiding the low-level signals from early layers or the overly abstract representations from deeper layers. The feature maps from this layer have been shown in practice to effectively support the generation of saliency maps, striking a balance between feature detail and semantic representation.

\subsection{Evaluation Metrics}
In the evaluation of attribution algorithms, traditional \textit{insertion score} and \textit{deletion score} metrics rely on task-specific confidence outputs. However, as our experiments do not include task labels or confidence information, incorporating downstream task outputs would weaken the generality of the interpretability methods. Additionally, the metric ROAR+ (an Extension of ROAR~\citep{NIPS2019_9167}), which requires retraining the model after removing key features, incurs a high computational cost, particularly when dealing with complex models and large-scale datasets, significantly increasing time and resource consumption. For these reasons, we reference two model-output-based evaluation methods proposed by~\cite{wang2023visual}: \textbf{Confidence Drop} and \textbf{Confidence Increase}~\citep{chattopadhay2018grad}, to evaluate the performance of attribution algorithms.

The \textbf{Confidence Drop} and \textbf{Confidence Increase} are evaluation metrics used to assess the effectiveness of attribution methods. The former measures whether model performance decreases when less important features are removed, with the ideal scenario being that only high attribution scores are retained and the removal of other features does not significantly impact performance. A lower value of \textbf{Confidence Drop} indicates better performance of the attribution method. The latter evaluates whether removing noisy information from the input enhances the model's confidence, with the expectation that the removal of irrelevant features should increase the model's confidence. A higher value of \textbf{Confidence Increase} indicates better performance of the attribution method. Both metrics serve to gauge whether the attribution method effectively identifies and preserves important features while mitigating the impact of noise.

\subsection{Baseline}
We compare our proposed Narrowing Information Bottleneck (NIB) method against several well-established attribution techniques to evaluate its effectiveness. The baseline methods include M2IB~\citep{wang2023visual}, RISE~\citep{Petsiuk2018rise}, Grad-CAM~\citep{selvaraju2017grad}, the method by \cite{chefer2021generic}, Saliency Maps~\citep{simonyan2013deep}, MFABA~\citep{zhu2024mfaba}, and FastIG~\citep{hesse2021fast}. 

\subsection{Result}


\begin{table}[htpb]
\centering
\caption{Performance comparison of the proposed NIB method with existing attribution methods across three datasets: Conceptual Captions, ImageNet, and Flickr8k. The evaluation metrics include Image Confidence Drop, Image Confidence Increase, Text Confidence Drop, Text Confidence Increase, and Frames Per Second (FPS). Lower confidence drop and higher confidence increase indicate better performance, while higher FPS reflects better computational efficiency. NIB consistently achieves superior performance in both accuracy and efficiency across all datasets.}
\label{tab.main}
\resizebox{\textwidth}{!}{%
\begin{tabular}{@{}c|c|cccccccc@{}}
\toprule
\textbf{Dataset}                                                               & \textbf{Method}        & \textbf{M2IB} & \textbf{RISE} & \textbf{Grad-CAM} & \textbf{\cite{chefer2021generic}} & \textbf{SM} & \textbf{MFABA} & \textbf{FastIG} & \textbf{NIB (Ours)} \\ \midrule
\multirow{5}{*}{\begin{tabular}[c]{@{}c@{}}Conceptual\\ Captions\end{tabular}} & Img Conf Drop $\downarrow$  & 1.1171        & 1.4197        & 4.1064            & 2.0138                                                   & 10.4351     & 10.1878        & 10.5117         & \textbf{0.9439}\\
                                                                               & Img Conf Incr $\uparrow$  & 39.3          & 28.8          & 20.2              & 33.65                                                    & 2.95        & 2.6            & 2.9             & \textbf{42.5}\\
                                                                               & Text Conf Drop $\downarrow$ & 1.706         & 0.8002        & 1.7994            & 0.9333                                                   & 1.0723      & 1.0503         & 0.9718          & \textbf{0.2705}\\
                                                                               & Text Conf Incr $\uparrow$   & 37.4          & \textbf{43.95}& 34.4              & 45.3                                                     & 40.05       & 36.25          & 41.25           & \textbf{43.95}\\
                                                                               & FPS $\uparrow$              & 0.6621        & 0.1           & 1.1686            & 1.272                                                    & 0.928       & 0.2494         & 0.9384          & \textbf{1.5817}\\ \midrule
\multirow{5}{*}{ImageNet}                                                      & Img Conf Drop $\downarrow$  & 1.1615        & 1.001         & 2.5483            & 1.6636                                                   & 4.7331      & 5.0242         & 4.7905          & \textbf{0.9012}\\
                                                                               & Img Conf Incr $\uparrow$  & 49.4          & \textbf{54}& 33.9              & 44                                                       & 16.4        & 12.7           & 16.9            & 53.1\\
                                                                               & Text Conf Drop $\downarrow$ & 2.6018        & 0.9928        & 2.6424            & 1.6732                                                   & 1.7631      & 1.7437         & 1.6486          & \textbf{0.4193}\\
                                                                               & Text Conf Incr $\uparrow$   & 25.4          & 46.8          & 25.7              & 29.9                                                     & 33.1        & 28.5           & 34.8            & \textbf{56.1}\\
                                                                               & FPS $\uparrow$              & 0.7995        & 0.1084        & 2.3115            & 2.7867                                                   & 1.5711      & 0.2758         & 1.5384          & \textbf{2.4481}\\ \midrule
\multirow{5}{*}{Flickr8k}                                                      & Img Conf Drop $\downarrow$  & 1.4731        & 3.01          & 5.1869            & 2.6214                                                   & 12.154      & 12.07          & 12.2244         & \textbf{1.4495}\\
                                                                               & Img Conf Incr $\uparrow$  & \textbf{28.1}& 5.7           & 13.6              & 26.8                                                     & 0.1         & 0.1            & 0.1             & \textbf{28.1}\\
                                                                               & Text Conf Drop $\downarrow$ & 2.0783        & 0.8914        & 2.1823            & 1.362                                                    & 1.0797      & 1.1551         & 1.3098          & \textbf{0.4562}\\
                                                                               & Text Conf Incr $\uparrow$   & 34.7          & 46.4          & 34.2              & 42.6                                                     & 45.9        & 42.6           & 43.9            & \textbf{55.3}\\
                                                                               & FPS $\uparrow$              & 0.7397        & 0.1076        & 1.958             & 2.4601                                                   & 1.3973      & 0.2748         & 1.3944          & \textbf{2.1995}\\ \bottomrule
\end{tabular}%
}
\end{table}

The performance of our proposed NIB (Narrowing Information Bottleneck) method is compared with several existing attribution methods, including M2IB, RISE, Grad-CAM, and others, across three different datasets: Conceptual Captions, ImageNet, and Flickr8k. The evaluation is based on four key metrics: Image Confidence Drop, Image Confidence Increase, Text Confidence Drop, and Text Confidence Increase. Additionally, the computational efficiency is assessed through Frames Per Second (FPS).

On the Conceptual Captions dataset, NIB demonstrates superior performance with an Image Confidence Drop of 0.9439, outperforming M2IB by 0.1732 units and Grad-CAM by 3.1625 units. Similarly, NIB achieves an Image Confidence Increase of 42.5, surpassing Grad-CAM by 22.3 units and RISE by 13.7 units, reflecting its strong ability to improve model focus by removing irrelevant features. For text-based metrics, NIB shows a notable improvement in Text Confidence Drop, with a 1.4355 unit advantage over M2IB and a 1.5289 unit gap with Grad-CAM. In terms of computational efficiency, NIB achieves the highest FPS of 1.5817, providing a substantial performance boost over RISE and other methods.

In the ImageNet dataset, NIB maintains its leading position, with the lowest Image Confidence Drop (0.9012), outperforming M2IB by 0.2603 units and Grad-CAM by 1.6471 units. Additionally, NIB achieves the highest Image Confidence Increase of 53.1, showing a 19.2 unit improvement over Grad-CAM and a 3.7 unit improvement over M2IB. For text metrics, NIB continues to excel with the lowest Text Confidence Drop (0.4193), representing a 2.1825 unit gap over M2IB and a 2.2231 unit gap over Grad-CAM. The FPS score for NIB is also competitive at 2.4481, showing high efficiency in real-time applications.

On the Flickr8k dataset, NIB achieves the lowest Image Confidence Drop (1.4495), only slightly better than M2IB by 0.0236 units, but significantly outperforms Grad-CAM by 3.7374 units. In terms of Image Confidence Increase, NIB ties with M2IB at 28.1, while exceeding Grad-CAM by 14.5 units. NIB also leads in Text Confidence Drop, with a score of 0.4562, outperforming M2IB by 1.6221 units and Grad-CAM by 1.7261 units. The computational efficiency of NIB remains high, with an FPS of 2.1995, reflecting its ability to maintain high-speed performance compared to slower methods like RISE.

In summary, the proposed NIB method consistently outperforms existing attribution techniques across all datasets, providing better attribution accuracy and computational efficiency. The improvements in both Confidence Drop and Confidence Increase metrics demonstrate NIB's capability to identify key features and remove irrelevant ones, enhancing the interpretability and robustness of multimodal models. Please see the attribution results images in the GitHub repository.

\section{Ablation Result}

\subsection{Ablation Study of \textit{num\_steps}}
\begin{table}[htpb]
\centering
\caption{Ablation study results on the \textit{num\_steps} parameter, comparing different values (5, 10, 15, and 20) across three datasets: Conceptual Captions, ImageNet, and Flickr8k. The evaluation metrics include Image Confidence Drop, Image Confidence Increase, Text Confidence Drop, and Text Confidence Increase. }
\label{tab.nsteps}
\resizebox{.8\textwidth}{!}{%
\begin{tabular}{@{}c|c|cccc@{}}
\toprule
\textbf{Dataset}                                                               & \textbf{\textit{num\_steps}} & \textbf{Img Conf Drop $\downarrow$} & \textbf{Img Conf Incr $\uparrow$} & \textbf{Text Conf Drop $\downarrow$} & \textbf{Text Conf Incr $\uparrow$} \\ \midrule
\multirow{4}{*}{\begin{tabular}[c]{@{}c@{}}Conceptual\\ Captions\end{tabular}} & 5                   & 0.9386                              & 42.4                                & 0.2056                               & 43.4                               \\
                                                                               & 10                  & 0.9439                              & 42.5                                & 0.2705                               & 43.95                              \\
                                                                               & 15                  & 0.9424                              & 42.75                               & 0.3688                               & 44.9                               \\
                                                                               & 20                  & 0.9378                              & 43.05                               & 0.4701                               & 44.3                               \\ \midrule
\multirow{4}{*}{Imagenet}                                                      & 5                   & 0.9554                              & 51.5                                & 0.3164                               & 56.7                               \\
                                                                               & 10                  & 0.9012                              & 53.1                                & 0.4193                               & 56.1                               \\
                                                                               & 15                  & 0.9558                              & 53.4                                & 0.4563                               & 56.4                               \\
                                                                               & 20                  & 0.9691                              & 54.3                                & 0.4957                               & 56.4                               \\ \midrule
\multirow{4}{*}{Flickr8k}                                                      & 5                   & 1.4547                              & 26.7                                & 0.3766                               & 54.1                               \\
                                                                               & 10                  & 1.4495                              & 28.1                                & 0.4562                               & 55.3                               \\
                                                                               & 15                  & 1.4443                              & 26.7                                & 0.6222                               & 53.2                               \\
                                                                               & 20                  & 1.4504                              & 27.3                                & 0.8326                               & 52.5                               \\ \bottomrule
\end{tabular}%
}
\end{table}
In the ablation study of the \textit{num\_steps} parameter, we investigate the impact of varying the number of optimization iterations on the performance of our proposed NIB method. As shown in Table~\ref{tab.nsteps}, we conducted experiments with \textit{num\_steps} values of 5, 10, 15, Target Layer Number fixed at 9, and 20 across the three datasets. 

The results indicate that as the number of steps increases, there is a trade-off between attribution accuracy and computational cost. For Conceptual Captions, increasing \textit{num\_steps} from 5 to 20 slightly improves the Text Confidence Drop but shows diminishing returns after \textit{num\_steps} = 10, with only minor improvements in accuracy but a noticeable increase in computational overhead. Similar trends are observed in ImageNet and Flickr8k, where the best performance in terms of Image and Text Confidence Drop occurs at \textit{num\_steps} = 10. Beyond this point, the gains are marginal, and the results suggest that setting \textit{num\_steps} to 10 provides an optimal balance between accuracy and efficiency.

\subsection{Ablation Study of \textit{target\_layer}}

\begin{table}[htpb]
\centering
\caption{Ablation study results on the \textit{target\_layer} parameter, comparing layers 3, 6, and 9 across the Conceptual Captions, ImageNet, and Flickr8k datasets. The evaluation metrics include Image Confidence Drop, Image Confidence Increase, Text Confidence Drop, and Text Confidence Increase.}
\label{tab.tarlayer}
\resizebox{.8\textwidth}{!}{%
\begin{tabular}{@{}c|c|cccc@{}}
\toprule
\textbf{Dataset}                                                               & \textbf{\textit{target\_layer}} & \textbf{Img Conf Drop $\downarrow$} & \textbf{Img Conf Incr $\uparrow$} & \textbf{Text Conf Drop $\downarrow$} & \textbf{Text Conf Incr $\uparrow$} \\ \midrule
\multirow{3}{*}{\begin{tabular}[c]{@{}c@{}}Conceptual\\ Captions\end{tabular}} & 3                      & 0.8616                              & 42.2                                & 1.2758                               & 38.7                               \\
                                                                               & 6                      & 0.8514                              & 43.55                               & 0.9867                               & 40.1                               \\
                                                                               & 9                      & 0.9439                              & 42.5                                & 0.2705                               & 43.95                              \\ \midrule
\multirow{3}{*}{ImageNet}                                                      & 3                      & 0.6889                              & 57                                  & 2.3727                               & 31.9                               \\
                                                                               & 6                      & 0.7793                              & 56.1                                & 2.4207                               & 32.5                               \\
                                                                               & 9                      & 0.9012                              & 53.1                                & 0.4193                               & 56.1                               \\ \midrule
\multirow{3}{*}{Flickr8k}                                                      & 3                      & 1.3022                              & 26.5                                & 1.5068                               & 43.1                               \\
                                                                               & 6                      & 1.2875                              & 28.3                                & 1.1981                               & 46.9                               \\
                                                                               & 9                      & 1.4495                              & 28.1                                & 0.4562                               & 55.3                               \\ \bottomrule
\end{tabular}%
}
\end{table}
In the ablation study of the \textit{target\_layer} parameter, we explore the impact of selecting different layers for generating saliency maps. Specifically, we evaluate the performance of layers 3, 6, and 9 across the Conceptual Captions, ImageNet, and Flickr8k datasets, with \textit{num\_steps} fixed at 10.

The results in Table~\ref{tab.tarlayer} reveal that layer 9 generally yields the best performance across all datasets. For Conceptual Captions, layer 9 achieves the lowest Text Confidence Drop (0.2705) and the highest Text Confidence Increase (43.95), indicating that the saliency maps generated from this layer provide the most accurate and interpretable attributions. Similarly, in the ImageNet dataset, layer 9 performs well, with a moderate Image Confidence Drop (0.9012) and the highest Text Confidence Increase (56.1), demonstrating that it effectively captures important features for both image and text alignment.

In contrast, selecting earlier layers (3 and 6) results in higher Confidence Drop scores, particularly in the Text Confidence Drop metric, suggesting that these layers lack the necessary high-level semantic information. Therefore, the results indicate that layer 9 strikes an optimal balance between capturing rich feature representations and providing interpretable attributions, making it the most effective choice for generating saliency maps in the NIB method.

\section{Conclusion}
This paper introduces the Narrowing Information Bottleneck Theory (NIBT) to address the challenges of randomness and hyperparameter sensitivity in explaining multimodal models like CLIP. By re-engineering the traditional Bottleneck method, NIBT improves interpretability for both image and text representations. The proposed method demonstrates superior performance in terms of both attribution accuracy and computational efficiency across multiple datasets. These advancements contribute to a more transparent and reliable interpretation of complex multimodal tasks, paving the way for broader applications of explainable AI in high-stakes environments.

\section*{Code of Ethics and Ethics Statement}
All authors of this paper have read and adhered to the ICLR Code of Ethics\footnote{\url{https://iclr.cc/public/CodeOfEthics}}during the research, development, and writing of this work. We affirm that this paper complies with all ethical guidelines outlined in the code. No part of our research involved human subjects, and there are no significant concerns regarding privacy, fairness, or potential conflicts of interest. All datasets and methodologies used are publicly available, and no sponsorship influenced the content or findings of this work. 

Additionally, the interpretability methods proposed in this paper aim to improve model transparency and are intended for enhancing the trustworthiness of AI systems, especially in sensitive domains such as healthcare. We believe that our contributions will support ethical AI deployment by making models more interpretable and accountable.

\section*{Reproducibility Statement}
We have taken multiple steps to ensure the reproducibility of our results. The detailed descriptions of the models, datasets, and training protocols used in our experiments are provided in the main text. Specific hyperparameters, including the number of iterations and selected layers for saliency map generation, are also reported in Section 5.2. Furthermore, the code for implementing our proposed Narrowing Information Bottleneck Theory (NIBT) and the associated datasets are available in the Anonymous Repository\footnote{\url{https://anonymous.4open.science/r/NIB-DBCD/}}. These resources should enable the community to reproduce our findings and apply the methods to their own work.

\bibliography{iclr2025_conference}
\bibliographystyle{iclr2025_conference}

\newpage
\appendix
\section{Principles of Information Theory}\label{apx.prin}
\subsection{Properties of Mutual Information}
1. \textbf{Non-negativity}  
\begin{equation}\notag
I(X; Y) \geq 0
\end{equation}

2. \textbf{Symmetry}  
\begin{equation}\notag
I(X; Y) = I(Y; X)
\end{equation}

3. \textbf{Relationship with Conditional Entropy and Joint Entropy}  
\begin{equation}\notag
\begin{aligned}
I(X; Y) &= H(X) - H(X|Y) \\
        &= H(Y) - H(Y|X) \\
        &= H(X) + H(Y) - H(X, Y) \\
        &= H(X, Y) - H(X|Y) - H(Y|X)
\end{aligned}
\end{equation}

4. \textbf{Relationship with Kullback-Leibler (KL) Divergence}  
\begin{equation}\notag
\begin{aligned}
I(X; Y) &= \sum_{y} p(y) \sum_{x} p(x|y) \log_2 \frac{p(x|y)}{p(x)} \\
        &= \sum_{y} p(y) D_{\mathrm{KL}}(p(x|y) \| p(x)) \\
        &= \mathbb{E}_Y \left[ D_{\mathrm{KL}}(p(x|y) \| p(x)) \right]
\end{aligned}
\end{equation}

\section{Proof of Theorem~\ref{theorem.nib}} \label{apx.proof1}
\begin{equation}\notag
\begin{aligned}
\mathrm{I}[x, \tilde {z}] & =\mathbb{E}_x\left[D_{\mathrm{KL}}[P(\tilde {z} \mid x) \| P(\tilde {z})]\right] \\
& =\int_x p(x)\left(\int_{\tilde {z}} p(\tilde {z} \mid x) \log \frac{p(\tilde {z} \mid x)}{p(\tilde {z})} d \tilde {z}\right) d x \\
& =\int_x \int_{\tilde {z}} p(x, \tilde {z}) \log \frac{p(\tilde {z} \mid x)}{p(\tilde {z})} \frac{q(\tilde {z})}{q(\tilde {z})} d \tilde {z} d x \\
& =\int_x \int_{\tilde {z}} p(x, \tilde {z}) \log \frac{p(\tilde {z} \mid x)}{q(\tilde {z})} d \tilde {z} d x + \int_x \int_{\tilde {z}} p(x, \tilde {z}) \log \frac{q(\tilde {z})}{p(\tilde {z})} d \tilde {z} d x \\
& =\int_x \int_{\tilde {z}} p(x, \tilde {z}) \log \frac{p(\tilde {z} \mid x)}{q(\tilde {z})} d \tilde {z} d x + \int_{\tilde {z}} p(\tilde {z})\left(\int_x p(x \mid \tilde {z}) d x\right) \log \frac{q(\tilde {z})}{p(\tilde {z})} d \tilde {z} \\
& =\mathbb{E}_x\left[D_{\mathrm{KL}}[P(\tilde {z} \mid x) \| Q(\tilde {z})]\right]-D_{\mathrm{KL}}[P(\tilde {z}) \| Q(\tilde {z})] \\
& \leq \mathbb{E}_x\left[D_{\mathrm{KL}}[P(\tilde {z} \mid x) \| Q(\tilde {z})]\right]
\end{aligned}
\end{equation}
Given this, we can simplify the final result as:
\begin{equation}\notag
    \begin{aligned}
I(\tilde{z}, x) &= E_x \left[ D_{\mathrm{KL}} \left( P(\tilde{z} \mid x) \| Q(\tilde{z}) \right) - D_{\mathrm{KL}} \left( P(\tilde{z}) \| Q(\tilde{z}) \right) \right] \\
                &\leq E_x \left[ D_{\mathrm{KL}} \left( P(\tilde{z} \mid x) \| \tilde{Q}(\tilde{z}) \right) \right]
\end{aligned}
\end{equation}

\[
\tilde{z}(\lambda) = \lambda z + \varepsilon, \quad \varepsilon \sim N(0, \sigma^2 \mathbb{I})
\]

is equivalent to:

\[
\tilde{z}_{ic}(\lambda) = \lambda z_{ic} + \varepsilon, \quad \varepsilon \sim N(0, \sigma^2)
\]

Given that:

\[
P(\tilde{z} \mid x) = N(\lambda_z, \sigma^2 \mathbb{I}),
\]

we let $Q(\tilde{z}) \sim N(0, \sigma^2 \mathbb{I})$. Note that $Q(\tilde{z})$'s covariance matrix can be $\sigma^2 \mathbb{I}$ since activations after linear or convolution layers tend to follow a Gaussian distribution~\citep{klambauer2017self}. 

\[
D_{\mathrm{KL}}(P(\tilde{z} \mid x) \| N(0, \sigma^2 \mathbb{I})) = \frac{1}{2} \left[ \operatorname{tr} \left( \Sigma^{-1} \Sigma \right) + (\mu - 0)^T \Sigma^{-1} (\mu - 0) - i \times c \right]
\]

Since the covariance matrices of $P(\tilde{z} \mid x)$ and $N(0, \sigma^2 \mathbb{I})$ are the same, $\operatorname{tr}(\Sigma^{-1}\Sigma) = i \times c$, and $\log \left( \frac{\operatorname{det}(\Sigma)}{\operatorname{det}(\Sigma)} \right) = 0$, the remaining term simplifies to:

\[
D_{\mathrm{KL}}(P(\tilde{z} \mid x) \| N(0, \sigma^2 \mathbb{I})) = \frac{1}{2} \cdot \frac{1}{\sigma^2} \|\mu\|^2
\]

Thus, we have:

\[
\sup I(\tilde{z}, x) = E_x \left[ \frac{1}{2} \cdot \frac{1}{\sigma^2} \|\mu\|^2 \right]
\]

Given that $P$ and $Q$ follow normal distributions with means $\mu_p$ and $\mu_q$, and covariance matrices $\Sigma_p$ and $\Sigma_q$, respectively, we use the following KL divergence formula:

\[
D_{\mathrm{KL}}(p(x) \| q(x)) = \frac{1}{2} \left[ (\mu_p - \mu_q)^\top \Sigma_q^{-1} (\mu_p - \mu_q) + \operatorname{tr}(\Sigma_q^{-1} \Sigma_p) - n \right]
\]
\begin{equation}\notag
    \begin{aligned}
\sup I(\tilde{z}(\lambda_1), x) - \sup I(\tilde{z}(\lambda_2), x) &= E_x \left[ \frac{1}{2} \cdot \frac{1}{\sigma^2} \left( \lambda_1^2 - \lambda_2^2 \right) \|\mu\|^2 \right] \\
&= E_x \left[ \frac{1}{2} \cdot \frac{1}{\sigma^2} \left( \lambda_1^2 - \lambda_2^2 \right) \|\mu\|^2 \right]
\end{aligned}
\end{equation}

Since $\lambda_1, \lambda_2 \in [0,1]$ and $\lambda_1 < \lambda_2$, we have:

\[
\sup I(\tilde{z}(\lambda_1), x) < \sup I(\tilde{z}(\lambda_2), x)
\]

Thus, Theorem~\ref{theorem.nib} is proven. When $\lambda = 0$, we have:

\[
P(\tilde{z}(0) \mid x) = N(0, \sigma^2 \mathbb{I}),
\]

which is the same as $Q(\tilde{z})$, leading to:

\[
D_{\mathrm{KL}}(P(\tilde{z}(0) \mid x) \| Q(\tilde{z})) = 0
\]

Hence, $I(\tilde{z}(0), x) = 0$.

\section{Proof of Theorem~\ref{theorem.2}}
When $\sigma \rightarrow 0$, we have:

\[
E_x \left[ \frac{1}{2} \cdot \frac{1}{\sigma^2} \left( \lambda_1^2 - \lambda_2^2 \right) \|\mu\|^2 \right] \rightarrow 0, \quad \text{as} \quad \lambda_1^2 - \lambda_2^2 \rightarrow 0^-
\]

Thus, Theorem~\ref{theorem.2} is proven.

\section{Proof of Theorem~\ref{theorem.3}}
The proof follows the standard method of change of variables in integral calculus. Detailed steps can be found in textbooks on calculus.

\section{Proof of Sensitivity Axioms}
\begin{equation} \label{eq.sens1}
\sum_{i} \sum_{c} \int_{0}^{1} \frac{\partial I(\tilde{z}(\lambda), Y)}{\partial \tilde{z}_{ic}(\lambda)} \frac{\partial \tilde{z}_{ic}(\lambda)}{\partial \lambda} d\lambda = I(\tilde{z}(1), Y) - I(\tilde{z}(0), Y)
\end{equation}
As shown in Equation~\ref{eq.sens1}, the total importance across all dimensions of $z$ corresponds to the decrease in $I(\tilde{z}(\lambda), Y)$ as the bottleneck narrows from fully open to fully closed, thereby satisfying the Sensitivity Axioms.

\section{Proof of Implementation Invariance Axioms}
By applying the chain rule, the proposed method inherently satisfies the Implementation Invariance Axiom.

\section{Details of $\lambda$} \label{apx.lambda}
Although $\lambda$ remains consistent across dimensions within a layer, the actual updated values vary depending on the magnitude of each feature. For example, if a feature has a magnitude of 8 and $\lambda$ is set to (1/4), the final updated value will be 2. In contrast, if another feature dimension has a magnitude of 6, the updated value will be 1.5. These variations ensure that our theoretical properties hold and that the bottleneck effect is applied dynamically based on the specific characteristics of each feature.

\section{Sensitivity of M2IB to the $\beta$ Hyperparameter}

\begin{table}[htpb]
\centering
\caption{Effect of the $\beta$ Hyperparameter on Confidence Metrics for the M2IB Method Across Different Datasets}
\label{tab.beta}
\resizebox{\textwidth}{!}{%
\begin{tabular}{@{}c|cccc|cccc|cccc@{}}
\toprule
Dataset & \multicolumn{4}{c|}{Conceptual Captions}                                                                                                                                                                                                                                                                         & \multicolumn{4}{c|}{Imagenet}                                                                                                                                                                                                                                                                                    & \multicolumn{4}{c}{Flickr8k}                                                                                                                                                                                                                                                                                     \\ \midrule
$\beta$ & \begin{tabular}[c]{@{}c@{}}Img Conf \\ Drop $\downarrow$\end{tabular} & \multicolumn{1}{c|}{\begin{tabular}[c]{@{}c@{}}Img Conf \\ Incr $\uparrow$\end{tabular}} & \begin{tabular}[c]{@{}c@{}}Text Conf \\ Drop $\downarrow$\end{tabular} & \begin{tabular}[c]{@{}c@{}}Text Conf \\ Incr $\uparrow$\end{tabular} & \begin{tabular}[c]{@{}c@{}}Img Conf \\ Drop $\downarrow$\end{tabular} & \multicolumn{1}{c|}{\begin{tabular}[c]{@{}c@{}}Img Conf \\ Incr $\uparrow$\end{tabular}} & \begin{tabular}[c]{@{}c@{}}Text Conf \\ Drop $\downarrow$\end{tabular} & \begin{tabular}[c]{@{}c@{}}Text Conf \\ Incr $\uparrow$\end{tabular} & \begin{tabular}[c]{@{}c@{}}Img Conf \\ Drop $\downarrow$\end{tabular} & \multicolumn{1}{c|}{\begin{tabular}[c]{@{}c@{}}Img Conf \\ Incr $\uparrow$\end{tabular}} & \begin{tabular}[c]{@{}c@{}}Text Conf \\ Drop $\downarrow$\end{tabular} & \begin{tabular}[c]{@{}c@{}}Text Conf \\ Incr $\uparrow$\end{tabular} \\ \midrule
0.01    & 0.8738                                                                & \multicolumn{1}{c|}{38.65}                                                               & 0.9779                                                                 & 44                                                                   & 0.835                                                                 & \multicolumn{1}{c|}{52.6}                                                                & 1.1897                                                                 & 41.7                                                                 & 1.2544                                                                & \multicolumn{1}{c|}{27.3}                                                                & 1.1789                                                                 & 47.6                                                                 \\
0.02    & 0.8886                                                                & \multicolumn{1}{c|}{39.05}                                                               & 0.93                                                                   & 45.25                                                                & 0.8714                                                                & \multicolumn{1}{c|}{52.8}                                                                & 1.3856                                                                 & 35.8                                                                 & 1.2635                                                                & \multicolumn{1}{c|}{27.6}                                                                & 1.0784                                                                 & 48.8                                                                 \\
0.03    & 0.9144                                                                & \multicolumn{1}{c|}{39.2}                                                                & 0.9591                                                                 & 46.35                                                                & 0.9138                                                                & \multicolumn{1}{c|}{51.8}                                                                & 1.5526                                                                 & 32.6                                                                 & 1.282                                                                 & \multicolumn{1}{c|}{28}                                                                  & 1.174                                                                  & 47.6                                                                 \\
0.04    & 0.943                                                                 & \multicolumn{1}{c|}{39.15}                                                               & 1.077                                                                  & 45.4                                                                 & 0.9534                                                                & \multicolumn{1}{c|}{51.5}                                                                & 1.7488                                                                 & 30.8                                                                 & 1.3025                                                                & \multicolumn{1}{c|}{28.6}                                                                & 1.2631                                                                 & 47.7                                                                 \\
0.05    & 0.9699                                                                & \multicolumn{1}{c|}{38.95}                                                               & 1.1631                                                                 & 44.7                                                                 & 0.9929                                                                & \multicolumn{1}{c|}{50.8}                                                                & 1.8996                                                                 & 31.4                                                                 & 1.3266                                                                & \multicolumn{1}{c|}{29}                                                                  & 1.3423                                                                 & 46.4                                                                 \\
0.06    & 1.0003                                                                & \multicolumn{1}{c|}{38.35}                                                               & 1.2813                                                                 & 42.9                                                                 & 1.0387                                                                & \multicolumn{1}{c|}{50}                                                                  & 1.9928                                                                 & 29.9                                                                 & 1.3525                                                                & \multicolumn{1}{c|}{28.6}                                                                & 1.444                                                                  & 45.2                                                                 \\
0.07    & 1.0325                                                                & \multicolumn{1}{c|}{38.15}                                                               & 1.3853                                                                 & 41.6                                                                 & 1.0842                                                                & \multicolumn{1}{c|}{50}                                                                  & 2.1209                                                                 & 30.4                                                                 & 1.382                                                                 & \multicolumn{1}{c|}{28}                                                                  & 1.5843                                                                 & 43.3                                                                 \\
0.08    & 1.0627                                                                & \multicolumn{1}{c|}{38.15}                                                               & 1.4937                                                                 & 39.75                                                                & 1.1249                                                                & \multicolumn{1}{c|}{50}                                                                  & 2.2748                                                                 & 27.1                                                                 & 1.4126                                                                & \multicolumn{1}{c|}{28.4}                                                                & 1.7559                                                                 & 41                                                                   \\
0.09    & 1.0918                                                                & \multicolumn{1}{c|}{38.3}                                                                & 1.6044                                                                 & 39.15                                                                & 1.1634                                                                & \multicolumn{1}{c|}{50.4}                                                                & 2.3936                                                                 & 25.9                                                                 & 1.444                                                                 & \multicolumn{1}{c|}{28.5}                                                                & 1.8962                                                                 & 37.9                                                                 \\
0.1     & 1.1244                                                                & \multicolumn{1}{c|}{38.4}                                                                & 1.7059                                                                 & 37.4                                                                 & 1.203                                                                 & \multicolumn{1}{c|}{49.8}                                                                & 2.5389                                                                 & 24.7                                                                 & 1.4731                                                                & \multicolumn{1}{c|}{28.1}                                                                & 2.0783                                                                 & 34.7                                                                 \\
0.2     & 1.4748                                                                & \multicolumn{1}{c|}{35.85}                                                               & 2.4205                                                                 & 27.85                                                                & 1.4989                                                                & \multicolumn{1}{c|}{45.9}                                                                & 3.621                                                                  & 18.3                                                                 & 1.8176                                                                & \multicolumn{1}{c|}{27}                                                                  & 2.8446                                                                 & 26.4                                                                 \\
0.3     & 1.8328                                                                & \multicolumn{1}{c|}{32.1}                                                                & 2.7202                                                                 & 24.95                                                                & 1.7226                                                                & \multicolumn{1}{c|}{42.4}                                                                & 4.1689                                                                 & 13.7                                                                 & 2.2708                                                                & \multicolumn{1}{c|}{24.7}                                                                & 3.0958                                                                 & 23.1                                                                 \\
0.4     & 2.1324                                                                & \multicolumn{1}{c|}{29.95}                                                               & 2.8379                                                                 & 23.5                                                                 & 1.8277                                                                & \multicolumn{1}{c|}{40.8}                                                                & 4.3674                                                                 & 12.9                                                                 & 2.6872                                                                & \multicolumn{1}{c|}{22.5}                                                                & 3.2358                                                                 & 21.4                                                                 \\
0.5     & 2.3452                                                                & \multicolumn{1}{c|}{28.4}                                                                & 2.9112                                                                 & 22.8                                                                 & 1.9042                                                                & \multicolumn{1}{c|}{39.9}                                                                & 4.4614                                                                 & 12.4                                                                 & 3.0238                                                                & \multicolumn{1}{c|}{19.9}                                                                & 3.2889                                                                 & 21.7                                                                 \\
0.6     & 2.5184                                                                & \multicolumn{1}{c|}{25.95}                                                               & 2.9522                                                                 & 22.6                                                                 & 1.9561                                                                & \multicolumn{1}{c|}{38.8}                                                                & 4.5032                                                                 & 12.2                                                                 & 3.2989                                                                & \multicolumn{1}{c|}{18.2}                                                                & 3.3116                                                                 & 21.6                                                                 \\
0.7     & 2.6748                                                                & \multicolumn{1}{c|}{25.3}                                                                & 2.9737                                                                 & 22.4                                                                 & 2.0037                                                                & \multicolumn{1}{c|}{37.5}                                                                & 4.5393                                                                 & 12.1                                                                 & 3.549                                                                 & \multicolumn{1}{c|}{16.4}                                                                & 3.3198                                                                 & 21.7                                                                 \\
0.8     & 2.8316                                                                & \multicolumn{1}{c|}{23.9}                                                                & 2.9863                                                                 & 22.35                                                                & 2.0717                                                                & \multicolumn{1}{c|}{36.7}                                                                & 4.5558                                                                 & 12.1                                                                 & 3.7771                                                                & \multicolumn{1}{c|}{15.5}                                                                & 3.3254                                                                 & 21.7                                                                 \\
0.9     & 2.9715                                                                & \multicolumn{1}{c|}{23.25}                                                               & 2.9947                                                                 & 22.15                                                                & 2.1504                                                                & \multicolumn{1}{c|}{35.5}                                                                & 4.5649                                                                 & 12.1                                                                 & 3.9766                                                                & \multicolumn{1}{c|}{14.3}                                                                & 3.3289                                                                 & 21.7                                                                 \\ \bottomrule
\end{tabular}%
}
\end{table}

As shown in the table~\ref{tab.beta}, the $\beta$ hyperparameter has a significant impact on the performance of the M2IB method, indicating that the method is sensitive to variations in $\beta$. As $\beta$ increases, the values of both Image Confidence Drop (Img Conf Drop $\downarrow$) and Text Confidence Drop (Text Conf Drop $\downarrow$) increase noticeably across the Conceptual Captions, Imagenet, and Flickr8k datasets. This demonstrates that as $\beta$ grows, the model's performance deteriorates on these metrics. Similarly, the values of Image Confidence Increase (Img Conf Incr $\uparrow$) and Text Confidence Increase (Text Conf Incr $\uparrow$) decrease as $\beta$ increases, further illustrating the influence of this hyperparameter on model behavior.

For instance, when $\beta$ increases from 0.01 to 0.9, on the Conceptual Captions dataset, the Image Confidence Drop rises from 0.8738 to 2.9715, and the Text Confidence Drop rises from 0.9779 to 2.9947, while the Image Confidence Increase decreases from 38.65 to 23.25, and the Text Confidence Increase decreases from 44 to 22.15. This trend is consistent across other datasets, particularly when $\beta$ is larger, leading to more pronounced performance degradation. Therefore, it can be concluded that the M2IB method is highly sensitive to the $\beta$ hyperparameter, and tuning $\beta$ has a substantial effect on the confidence metrics of the model.

\section{Expanded Ablation Study of \textit{num\_steps} and \textit{target\_layer}}

We expanded the scope of our ablation studies with additional results, as shown in Tables~\ref{tab:e-nsteps} and~\ref{tab:e-tsteps} in the supplementary material, to provide a more comprehensive analysis of hyperparameter interactions.
\begin{table}[]
\caption{Expanded the scope of num\_steps ablation study}
\label{tab:e-nsteps}
\resizebox{\textwidth}{!}{%
\begin{tabular}{@{}c|c|c|cc|cc@{}}
\toprule
\textbf{Dataset}                     & \textit{\textbf{num\_steps}} & \textit{\textbf{target\_layer}} & \textbf{Img Conf Drop} & \textbf{Img Conf Incr} & \textbf{Text Conf Drop} & \textbf{Text Conf Incr} \\ \midrule
\multirow{4}{*}{Conceptual Captions} & 3                            & 9                               & 0.9649                 & 42.5                   & 0.2066                  & 43.5                    \\
                                     & 8                            & 9                               & 0.9424                 & 43.2                   & 0.2409                  & 43.2                    \\
                                     & 13                           & 9                               & 0.9422                 & 42.8                   & 0.3268                  & 44.75                   \\
                                     & 18                           & 9                               & 0.9408                 & 42.9                   & 0.4288                  & 45                      \\ \midrule
\multirow{4}{*}{ImageNet}            & 3                            & 9                               & 0.9698                 & 51.7                   & 0.3746                  & 56.3                    \\
                                     & 8                            & 9                               & 0.9438                 & 53                     & 0.4046                  & 56.9                    \\
                                     & 13                           & 9                               & 0.9506                 & 53                     & 0.4444                  & 57.3                    \\
                                     & 18                           & 9                               & 0.9648                 & 53.9                   & 0.4935                  & 56                      \\ \midrule
\multirow{4}{*}{Flickr8k}            & 3                            & 9                               & 1.4636                 & 25.5                   & 0.3875                  & 51                      \\
                                     & 8                            & 9                               & 1.4526                 & 26                     & 0.4324                  & 55                      \\
                                     & 13                           & 9                               & 1.4437                 & 26.5                   & 0.5525                  & 53.6                    \\
                                     & 18                           & 9                               & 1.4463                 & 27.1                   & 0.7381                  & 53                      \\ \bottomrule
\end{tabular}%
}
\end{table}

\begin{table}[]
\caption{Expanded the scope of target\_layer ablation study}
\label{tab:e-tsteps}
\resizebox{\textwidth}{!}{%
\begin{tabular}{@{}c|c|c|cc|cc@{}}
\toprule
\textbf{Dataset}                     & \textit{\textbf{num\_steps}} & \textit{\textbf{target\_layer}} & \textbf{Img Conf Drop} & \textbf{Img Conf Incr} & \textbf{Text Conf Drop} & \textbf{Text Conf Incr} \\ \midrule
\multirow{6}{*}{Conceptual Captions} & 10                           & 2                               & 0.8577                 & 41.95                  & 1.1717                  & 40                      \\
                                     & 10                           & 4                               & 0.8338                 & 43.6                   & 1.2319                  & 37.2                    \\
                                     & 10                           & 5                               & 0.8106                 & 43.95                  & 1.5805                  & 34.5                    \\
                                     & 10                           & 6                               & 0.8514                 & 43.55                  & 0.9867                  & 40.1                    \\
                                     & 10                           & 7                               & 0.8911                 & 43.75                  & 0.8798                  & 39.4                    \\
                                     & 10                           & 8                               & 0.8898                 & 41.6                   & 0.3655                  & 43.75                   \\ \midrule
\multirow{6}{*}{ImageNet}            & 10                           & 2                               & 0.7062                 & 54.5                   & 2.1586                  & 33.7                    \\
                                     & 10                           & 4                               & 0.72                   & 55.1                   & 2.625                   & 31.8                    \\
                                     & 10                           & 5                               & 0.7906                 & 55                     & 3.496                   & 19.9                    \\
                                     & 10                           & 6                               & 0.7793                 & 56.1                   & 2.4207                  & 32.5                    \\
                                     & 10                           & 7                               & 0.8931                 & 54.8                   & 1.4745                  & 47.4                    \\
                                     & 10                           & 8                               & 0.9258                 & 52.7                   & 0.985                   & 49.8                    \\ \midrule
\multirow{6}{*}{Flickr8k}            & 10                           & 2                               & 1.2641                 & 28.1                   & 1.2748                  & 44.4                    \\
                                     & 10                           & 4                               & 1.283                  & 27.7                   & 1.4821                  & 40.9                    \\
                                     & 10                           & 5                               & 1.247                  & 27.4                   & 2.0031                  & 36.5                    \\
                                     & 10                           & 6                               & 1.2875                 & 28.3                   & 1.1981                  & 46.9                    \\
                                     & 10                           & 7                               & 1.3609                 & 28.6                   & 1.2775                  & 43.1                    \\
                                     & 10                           & 8                               & 1.2881                 & 28.8                   & 0.9316                  & 46.3                    \\ \bottomrule
\end{tabular}%
}
\end{table}

\begin{table}[h]
\centering
\caption{Forward and backward passes of NIB compared to other methods}
\label{tab.comp_effi}
\resizebox{.35\linewidth}{!}{%
\begin{tabular}{@{}c|cc@{}}
\toprule
Method        & Forward & Backward \\ \midrule
NIB           & 12      & 10       \\
RISE          & 301     & 0        \\
Grad-CAM      & 3       & 2        \\
Chefer et al. & 3       & 0        \\
SM            & 3       & 0        \\
MFABA         & 21      & 10       \\
M2IB          & 22      & 20       \\
FastIG        & 3       & 0        \\ \bottomrule
\end{tabular}%
}
\end{table}

\section{Computational Efficiency}
As outlined in Table~\ref{tab.comp_effi}, NIB achieves superior efficiency in terms of forward and backward passes compared to other methods. The efficiency of our method scales independently of model complexity and dataset size, as demonstrated in our evaluation.

\section{Attribution Results}

To provide further insights into generalization, we have generated attribution examples using the RSICD remote sensing dataset. Figure~\ref{fig:rsice} indicates that the proposed method could generalise effectively to other domains.

\begin{figure}
    \centering
    \includegraphics[width=0.5\linewidth]{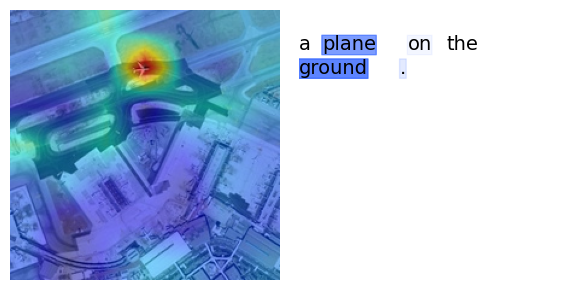}
    \caption{Attribution result of RSICD dataset}
    \label{fig:rsice}
\end{figure}

\end{document}